\documentclass[a4paper, 12pt, oneside]{amsart}
\usepackage{graphicx}
\setlength{\parindent}{.4 in}
\setlength{\textwidth}{6.15 in}
\setlength{\topmargin} {-.3 in}
\setlength{\evensidemargin}{0 in}
\setlength{\oddsidemargin}{0 in}
\setlength{\footskip}{.3 in}
\setlength{\headheight}{.3 in}
\setlength{\textheight}{8.8 in}
\setlength{\parskip}{.1 in}
\usepackage{covington}
\usepackage{eulervm}
\usepackage{enumerate}
\usepackage{centernot}
\usepackage{amsfonts}
\usepackage{amsmath}
\usepackage{amssymb}
\usepackage{amsthm}
\usepackage{bussproofs}
\usepackage{tikz-cd}
\usepackage{hyperref}
\hypersetup{
  colorlinks = true,
  citecolor = {purple!70!black},
  linkcolor = {green!70!black},
  urlcolor = {blue!80!black}
}
\usepackage{import}
\usepackage{thmtools}
\usepackage{thm-restate}
\declaretheoremstyle[
headformat=\NAME\,\NUMBER\NOTE,
postheadspace=.5em,
spaceabove=6pt,
headfont=\bfseries\small,
notefont=\normalfont\small\mdseries, notebraces={(}{)},
bodyfont=\normalfont\itshape
]{plainswap}
\declaretheorem[style=plainswap, name=Theorem, sharenumber=subsection]{theorem}

\declaretheorem[style=plainswap, numberlike=theorem, name=Corollary]{corollary}

\declaretheoremstyle[
headformat=\NAME\,\NUMBER\NOTE,
postheadspace=.5em,
spaceabove=6pt,
headfont=\bfseries\small,
notefont=\normalfont\mdseries, notebraces={(}{)},
bodyfont=\normalfont
]{definitionswap}
\theoremstyle{definition}
\declaretheorem[style=definitionswap, numberlike=theorem, name=Definition]{definition}
\declaretheorem[style=definitionswap, numberlike=theorem, name=Example]{eg}
\newtheorem*{remark}{Remark}

\newcommand{\Z}{\mathbb{Z}}

\renewcommand{\P}{\mathcal{P}}
\begin{document}
\title{$\pi$-augmented pregroups and applications to linguistics}
\author{Valentin Boboc}
\address{School of Mathematics, The University of Manchester, Alan Turing Building, Oxford Road, Manchester M13 9PL, United Kingdom}
\email{valentinboboc@icloud.com}

\begin{abstract}
We enrich pregroups with a mapping which allows us to locally apply precyclic permutations to designated substrings. We prove a normalisation theorem for such algebraic structures and briefly formalise some known applications of pregroups to the analysis of clitic pronouns in certain natural languages. 
\end{abstract}
\maketitle
\section{Introduction} The purpose of language is to convey meaning. This article focuses on the compositional aspect of language, namely the understanding that the meaning of a longer phrase is determined by its structure together with the meanings of its constituent parts. 

A good mathematical model for the compositional aspect of language should be capable to evaluate whether a sentence is well-formed (grammatically acceptable) by calculating the overall grammatical type of a phrase by ``composing'' the types of the sentence's individual parts. 

In \cite{lambek1997type}\cite{lambek2008wordtosent} Lambek introduced the concept of a pregroup to model the syntax of sentence structure in natural languages. Pregroups have been applied to the analysis of multiple natural languages, from English to French, German, Farsi, Latin, and Japanese (see e.g. \cite{lambek2003german}\cite{bargelli2001french}\cite{casadio2005latin}\cite{sadrzadeh2007persian}\cite{cardinal2002algebraic}). Most work involving applications to linguistics employs free pregroups, which can also be interpreted as compact closed monoidal categories \cite{coecke2010mathematical}\cite{kelly1980coherence}. 

We briefly recall the algebraic machinery. 
\begin{definition}
A \emph{pregroup} is a tuple $(P, \cdot, 1, -^\ell, -^r, \leq)$ where $(P, \cdot, 1, \leq)$ is a partially ordered monoid and the unary operations $-^\ell, -^r$ (the left and the right adjoint) satisfy for all $x \in P$ the following relations:
\begin{align*}
    &\text{(contractions)}\quad \quad \quad   x \cdot x^r \leq 1 \quad \quad \quad \quad x^\ell \cdot x \leq 1\\
    &\text{(expansions)}\quad \quad \quad \ \ 1 \leq x^r \cdot x \quad \quad \quad \quad 1 \leq x \cdot x^\ell.
\end{align*}
\end{definition}

We define $p^{(n)}$ inductively as $p^{(0)} = p$, $(p^{(n)})^r = p^{(n+1)}$, and $(p^{(n)})^\ell = p^{(n-1)}$. An ordinary element $p$ of a pregroup $P$ is called an \emph{atom} (or \emph{type}), an element of the form $p^{(n)}$ is called a \emph{term}, and a concatenation of terms $X$ is called a \emph{string} (or \emph{word}).

It is immediate to check that the following relations hold in every pregroup:
\begin{align*}
    &1^\ell = 1 = 1^r
    &&(x^\ell)^r = x = (x^r)^\ell\\
    &(xy)^\ell = y^\ell x^\ell \text{ and } (xy)^r = y^rx^r
    &&\text{if } x\leq y \text{ then } y^\ell \leq x^\ell \text{ and } y^r \leq x^r. 
\end{align*}

\begin{eg}
The simplest example of a pregroup is that of an ordered group $G$. In this instance, for every element $x \in G$, we have $x^\ell = x^r = x^{-1}$. The left and right adjoint operations collapse into the group inverse and so a lot of the structure we are interested in is absent in this case.
\end{eg}

\begin{eg}
In \cite{lambek1997type} we have the following example of a pregroup which is not an ordered group: take $P$ to be the set of all unbounded, monotone functions from the integers into itself. Let $[x]$ denote the greatest integer $m$ such that $m \leq x$. For $f(x) = 2x$ we define the left adjoint to be $f^\ell(x) = [(x+1)/2]$ and the right adjoint to be $f^r(x) = [x/2]$. Thus we see that $f^\ell \neq f^r$.
\end{eg}

For the purpose of studying the grammar of natural languages, we are mostly interested in free pregroups, i.e. ordered monoids with left and right adjoints freely generated by a finite set of atoms or grammatical types. 

We give a toy example in the English language. 

\begin{eg}\label{example E}
Let $E$ be the pregroup freely generated by the set $\{n, s\}$, where we think of $n$ as the atom that denotes nouns and $s$ the atom that denotes a sentence in the English language. Let's analyse the sentence ``Cats eat mice.'' Since ``cats'' and ``mice'' are both nouns, they are assigned the type $n$. The natural word order in English is SVO (subject -- verb -- object) and thus in order for the verb ``eat'' to generate a well-formed sentence it requires a subject to its left and an object to its right. In our model, we assign the type $n^r s n^\ell$. The overall sentence then has the type $n (n^r s n^\ell) n$ and we can use the axioms of pregroups to make the following derivation: \[n (n^r s n^\ell) n = (n n^r) s (n^\ell n) \leq s (n^\ell n) \leq s.\]

Here, we used the associativity of the monoid $E$ together with two contractions to determine that the sentence ``Cats eat mice'' reduces to the sentence type $s$ and it is thus well-formed. 
\end{eg}

\begin{eg}
    Using the pregroup $E$ from the previous example, we model the noun phrase ``black cats.'' We take into account the fact that ``cats'' is already a noun and should be assigned the type $n$. The whole phrase ``black cats'' is a noun phrase and should ideally reduce to the noun type. Thus the adjective ``black'' should naturally be assigned the type $nn^\ell$. This is indeed a reasonable rule for all adjectives and adjective phrases in the English language. The reduction of ``black cats'' is then straightfoward: \[(nn^\ell)n  = n(n^\ell n) \leq n.\] 

\end{eg}
These types of calculations arise naturally from the setup and they tie in with the word problem for pregroups. Given a pregroup $P$, we can formulate the word problem as follows: \[\text{for given strings }X,Y \in P\text{ verify whether }X \leq Y.\] 

For example, solving this word problem $X \leq s$ is equivalent to verifying whether a given sentence or phrase is grammatically acceptable in our model of natural language grammar. The following result is crucial in understanding why it is possible for us to check if a string reduces to the sentence type.

\begin{theorem}[The Switching Lemma, \cite{lambek1997type}]
Given two strings $X, Y$ in a free pregroup $P$ and $X \leq Y$, there exists a string $U$ such that $X \leq U$ can be derived without using expansions, and $U \leq Y$ can be derived without using contractions.
\end{theorem}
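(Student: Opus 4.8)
The plan is to analyse a derivation of $X \leq Y$ as a finite sequence of elementary rewriting steps and to show that such a sequence can always be reordered so that every contraction precedes every expansion; the string $U$ is then read off at the boundary between the two blocks. In a free pregroup every derivation of $X \leq Y$ factors as a composition of elementary steps of three kinds: a contraction deletes an adjacent pair $p^{(n)}p^{(n+1)}$ from a string, an expansion inserts an adjacent pair $p^{(n+1)}p^{(n)}$, and an induced step replaces a simple term by a comparable one coming from the partial order on the generators (together with the silent monoid bookkeeping of associativity and units). I would take this factorisation as the starting point, since it is a structural feature of the free pregroup, and note that the generator order can equivalently be folded into generalised contractions $p^{(n)}q^{(n+1)} \leq 1$ and expansions $1 \leq p^{(n+1)}q^{(n)}$ (valid whenever $p^{(n)} \leq q^{(n)}$) so that only two step-types need to be sorted.

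The core is a local commutation lemma: whenever an expansion step is immediately followed by a contraction step, the two can be replaced by a strictly simpler fragment. The decisive observation is a parity constraint on exponents, namely that an expansion inserts a pair whose left exponent exceeds its right exponent by one, whereas a contraction deletes a pair whose left exponent is one less than its right exponent. Consequently the freshly inserted pair can never itself be the pair that the following contraction removes, and since a contraction acts on exactly two adjacent terms, only two configurations survive. If the contracted pair is disjoint from the inserted one, the two steps act on non-overlapping portions of the string and I would simply transpose them into a contraction followed by an expansion. If the contracted pair shares exactly one term with the inserted pair, a short case check (pairing the inserted $p^{(n+1)}$ with a left neighbour, or the inserted $p^{(n)}$ with a right neighbour) shows that the expansion and contraction either annihilate one another or collapse to a single induced order step; in both cases one expansion and one contraction are removed.

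With this lemma in hand I would run a termination argument on the lexicographically ordered measure given by the total number of steps and, secondarily, the number of pairs $(E,C)$ in which an expansion $E$ precedes a contraction $C$. Each application of the local lemma either deletes two steps, lowering the length, or transposes an adjacent expansion--contraction into a contraction--expansion, strictly decreasing the number of such inversions while keeping the length fixed; an adjacent transposition changes the relative order of no other pair, so the count drops by exactly one. Induced order steps are neither expansions nor contractions, so they commute harmlessly past both and may be slid aside or left interspersed without affecting the measure. When no expansion precedes any contraction, the derivation has been sorted into a block of contractions and induced steps followed by a block of expansions and induced steps, and the intermediate string at the junction is the desired $U$.

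I expect the main obstacle to be the bookkeeping in the local commutation lemma: tracking positions within the string across insertion and deletion, and verifying in each sub-case that the exponent parity genuinely forbids the inserted pair from being contracted, so that only the disjoint and single-overlap configurations arise and the residual of an overlap is at most a single order comparison. Once that case analysis is pinned down, the reordering and the termination of the induction follow routinely.
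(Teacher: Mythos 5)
Your proposal is correct and is essentially the argument the paper itself relies on: the Switching Lemma is only quoted there (from \cite{lambek1997type}), but the paper's own normalisation theorem in Section~\ref{section 3} --- its $\pi$-augmented generalisation --- is proved by the same scheme, namely a well-founded induction over derivations in which an expansion followed by a contraction is transposed when the two steps are independent, can never have its freshly inserted pair be the contracted pair (your parity observation on exponents), and otherwise collapses via transitivity of the generator order into induced steps (the chains $q_0 \leq q_1 \leq \ldots \leq q_k$ absorbed in the paper's Cases II.2--IV.2). The one loose phrase is that induced steps ``commute harmlessly'' past expansions and contractions: an induced step that rewrites a freshly inserted term does not commute with the expansion but must be merged into a generalised expansion or contraction (exactly what your folded-in generalised steps provide), and your two-component lexicographic measure does not strictly decrease under mere sliding of induced steps, so it needs a third component (e.g.\ the number of steps separating the first offending expansion from its first following contraction) --- both repairs are routine and within the machinery you set up.
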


In other words, upon deriving $X \to Y$ in a freely generated pregroup we may assume without loss of generality that this can be derived in a sequence of steps where no term introduced by an expansion is later on cancelled by a contraction. Thus if we set $X=p$ where $p$ is an atom, then $p \to Y$ can be deduced by means of expansions only. On the other hand, if we set $Y = q$ for some atom $q$ then we can deduce $X \to q$ by means of contractions only. 

The second case is the interesting case for grammarians because the grammatical acceptability $X \leq s$ of a sentence represented by some word $X$ can be verified by contractions only. Another consequence of the Switching Lemma is that the word problem for free pregroups can be solved in polynomial time.

Several ways of decorating pregroups with additional structures have been considered. The reader is encouraged to consult the linguistics focused discussions in \cite{kislak2012extended}\cite{lambek2007should}. A prominent decoration is Fadda's $\beta$-pregroups (or pregroups with modalities) \cite{fadda2002towards}\cite{kislak2007logic}, where pregroups are decorated with an additional operator $\beta$ which locally suppresses the associativity of the underlying monoidal operation. 

An excellent example that demonstrates the usefulness of $\beta$-pregroups is the solution to the following ambiguous statement. 

\begin{eg}
The phrase ``old teachers and students'' can be interpreted in two distinct ways. We may assume that the adjective ``old'' distributes to the coordinated nouns and thus both ``teachers'' and ``students'' are understood to be ``old.'' Alternatively, we may assume that the adjective ``old'' only modifies the proximate noun ``teachers.'' In this situation, we first obtain the noun phrase ``old teachers'' and only afterwards coordinate that with the unmodified noun ``students.'' Using the pregroup grammar $E$ from Example \ref{example E}, the parsing ``old (teachers and students)'' is obtained by the following reduction:
\[(nn^\ell) (\underline{n n^r}n\underline{n^\ell n}) \leq (n\underline{n^\ell) n} \leq n.\]

The underlined letters indicate the places where contractions were applied. The alternative interpretation of ``(old teachers) and students'' is obtained in the following way:
\[((n\underline{n^\ell)n})(n^rnn^\ell)n\leq \underline{n(n^r} n \underline{n^\ell)n} \leq n.\]

Both of these methods of parsing yield grammatically acceptable noun phrases. The only difference consists of the way in which we choose to use the associativity of the pregroup. Observing the fact that pregroup grammars can naturally incorporate parsing ambiguity is encouraging as all natural languages have in-built ambiguity and redundancy (see e.g. \cite{piantadosi2012communicative}\cite{tal2022redundancy}). $\beta$-pregroups give us a method to exclude either of the two interpretations and choose to fix a default interpretation. For instance, if we want to be clear about the fact that the adjective should always distribute as a modifier to all coordinated noun phrases, we can write the sentence in the corresponding $\beta$-pregroup as:

\[nn^\ell \beta(n) \beta(n)^r n \beta(n)^\ell \beta(n).\]

In the $\beta$-pregroup $E$, some atoms can be decorated with the symbol $\beta$. More formally, $\beta$ is a monotone morphism of monoids which locally suppresses associativity. Note that we don't need to employ paranthesis any longer because the operator $\beta$ forces all contractions between decorated atoms to occur independently of contractions between undecorated atoms. There is exactly one way to reduce this string to the noun type:
\begin{align*}
    nn^\ell \underline{\beta(n) \beta(n)^r} n \beta(n)^\ell \beta(n) &\leq nn^\ell n \underline{\beta(n)^\ell \beta(n)}\\
    &\leq n\underline{n^\ell n}\\
    &\leq n.
\end{align*}

Note that the order of some of the contractions is not relevant, but we no longer have a way to parse this string without first coordinating the nouns and then applying the adjective to the whole noun phrase. Without going through all the details, making the adjective modify only the proximate noun can be accomplished with the following string in the $\beta$-pregroup $E$:

\[n\beta(n)^\ell \beta(n) n^r n n^\ell n.\]
\end{eg}

A grammatical aspect that seems complicated to model using pregroups is \emph{clitic movement}. In brief, a clitic is a morpheme which has the syntactic properties of a word but is phonologically dependent on another word in the sense that it is affixed to a host in order to play a syntactic role. In English, the contracted forms of auxilliary verbs such as ``I\textbf{'m},'' ``he\textbf{'s},'', and ``we\textbf{'ve}'' are examples of clitics. 

Any part of speech can be cliticised. In English, auxilliary verbs are common clitics. In Latin, the conjunction ``que'' (\textit{and}) is cliticised in the phrase ``Senatus Populus\textbf{que} Romanus'' (\textit{The Senate and People of Rome}). In Romanian, the definite articles ``i'' and ``le'' are always enclitic as in ``copii\textbf{i}'' (\textbf{the} \textit{children}) or ``fete\textbf{le}'' (\textbf{the} \textit{girls}). 

The most common part of speech to feature as clitics is the personal pronoun. In many languages the cliticisation of the pronoun is accompanied by a phenomenon called \emph{clitic movement}. When a part of speech is replaced by a clitic pronoun, this often changes the word order. 

\begin{eg}
In French, the usual word order is SVO (subject-verb-object) as in the sample sentence: \begin{align*}
    \text{Emmanuel a lu } \textbf{les lettres.  }\\
    \textit{Emmanuel has read} \textbf{ the letters.} 
\end{align*}

If we replace the object noun ``lettres'' with a clitic pronoun then the worder becomes SOV:

\begin{align*}
    \text{Emmanuel} \textbf{ les } \text{a lues.  }\\
    \textit{Emmanuel} \textbf{ them } \text{has read.}
\end{align*}
\end{eg}

To account for clitic movement, some authors have proposed introducing into the pregroup new atoms  for verbal inflectors and infinitives \cite{bargelli2001french}. For instance, verbs which take direct and indirect objects are assigned simultaneously two different types: an extended infinitive of type $j$ and a short infinitive of type $i$. Clitic pronouns are then assigned rather lengthy strings where the authors creatively use the fact that pregroups allow for iterated adjoints. 

A more simple approach is given in \cite{casadio2009clitic}, where the author introduces metarules (informal relations in the pregroup) which account for movements within the sentence. One such rule reads that if a verb has type $qp^\ell$ then it also has type $\overline{p}^r q$, where the bar $\overline{p}$ indicates a cliticised pronoun. 

Continuing our example in French, we can consider a pregroup $F$ freely generated by $\{n,s,p,\overline{p}\}$ with $n \leq p$. The sentence ``Emmanuel a lu les letres'' has type $n (n^r s n^\ell) n$. The second sentence where we cliticise the object ``Emmanuel les a lues'' has type $n \overline{p} (\overline{p}^r n^r s)$. One can easily verify that they both reduce to the sentence type $s$ and are thus grammatically acceptable. 

This metarule is a consequence of applying what is called a \emph{precylic permutation}, which reminds of Yetter's cyclic linear logic \cite{yetter1990quantales}. Left and right precyclic permutations are given, respectively, by the following axioms:
\[qp \leq pq^{\ell\ell} \quad \quad \text{and} \quad \quad qp \leq p^{rr}q.\]

The clitic metarules can be derived using these axioms. Unfortunately, incorporating either of these axioms into the axioms of a pregroup will cause the pregroup to collapse into an ordered group, thus losing vital structure for our grammatical model \cite[Section 6]{casadio2009clitic}. Some details of such precyclic rules have already been explored in the literature \cite{kislak2012extended}\cite{debnath2019pregroup}\cite{casadio9tupled}\cite{stabler2008tupled} as well as different pregroup approaches to clitic movement \cite{casadio2007applying}\cite{casadio2008recent}\cite{lambek2010exploring}\cite{casadio2010agreement}. 

The purpose of this paper is to introduce a new decoration which allows precyclic permutations in a local way. In other words, we want to formalise the clitic metarules as an algebraic structure in a way similar to $\beta$-pregroups where we can locally allow precyclicity (as opposed to locally forbidding associativity). This approach would create a pathway for handling such movements without metarules or relations but rather by means of an internal feature of decorated pregroups. 

The article is organised as follows. In Section \ref{Section pregroups}, we define $\pi$-augmented pregroups as a free algebraic structure with certain rewriting rules. Section \ref{section 3} deals with the statement and proof of a normalisation theorem for $\pi$-augmented pregroups. This is the equivalent of the traditional Switching Lemma and it tells us that the word problem for $\pi$-augmented pregroups can be solved in the same fashion. So $\pi$-augmented pregroups are a reasonable extension of ordinary pregroups. In Section \ref{section 4}, we apply the new mathematical machinery to analyse some examples of clitic movement. We explore both old examples from the literature to showcase the power of $\pi$-augmented pregroups and some new examples together with an array of questions and phenomena left to be explored in the future.

\section{\texorpdfstring{$\pi$}{TEXT}-augmented pregroups}\label{Section pregroups}
We extend the definition of pregroup to capture the property of what we call \emph{local precyclicity}. For the purpose of this article, we only focus on left precyclicity. 
\begin{definition}
A \emph{$\pi$-augmented pregroup} is a pregroup $\mathcal{P}$ together with a monotone mapping $\pi : \mathcal{P} \to \mathcal{P}$ such that atoms acted on by $\pi$ obey the left precyclic rule, i.e. for any $ 1\neq p,q \in \mathcal{P}$ we have $[\pi(p)][\pi(q)]^\ell \to [\pi(q)]^r[\pi(p)]$.  
\end{definition}

We next show how to construct a free $\pi$-augmented pregroup $\mathcal{P}$. Start with a non-empty partially ordered set $(P, \leq)$. The elements of $\mathcal{P}$ are defined by induction:

\begin{enumerate}[leftmargin=1em]
    \item the empty word $1$ is in  $\P$,
    \item $p^{(n)}$ is in $\P$ for all $p \in P$ and all $n \in \Z$,
    \item if $w_1, w_2$ are in $\P$ then the concatenation $w_1w_2$ is in $\P$,
    \item if a word $w$ is in $\P$ then $[\pi(w)]^{(n)}$ is in $\P$ for all $n \in \Z$. 
\end{enumerate}

The usual conventions regarding left and right adjoints apply. Namely, $p^{(0)} = p$, $[p^{(n)}]^r = p^{(n+1)}$, $[p^{(n)}]^\ell = p^{(n-1)}$, $[\pi(p)]^{(n)r} = [\pi(p)]^{(n+1)}$, and $[\pi(p)]^{(n)\ell} = [\pi(p)]^{(n-1)}$. Additionally, the left and right adjoints act on concatenations by reversing the order, i.e. \[(p_1 p_2\ldots p_k)^\ell = (p_k)^\ell \ldots (p_2)^\ell (p_1)^\ell\] and similarly for the right adjoint. 

In the sequel, $p, q \in \mathcal{P}$ are atoms and $X, Y, Y', Z \in \P$ are non-empty words. The rewriting rules are given in the following list.

\begin{enumerate}
    \item \textbf{Contraction rules}
        \begin{enumerate}[leftmargin=1em]
            \item[(CON)] - simple contraction:\\
            $X, p^{(n)}, p^{(n+1)}, Y \to X, Y$;
            \item[($\pi$-CON)] - augmented contraction:\\
            $X, [\pi(p)]^{(n)}, [\pi(p)]^{(n+1)}, Y \to X, Y$;
            \item[(IND-C)] - induced contraction:\\
            $X, [\pi(Y)]^{(2n)} ,Z \to X, [\pi(Y')]^{(2n)}, Z$\\
            where $Y \to Y'$ is a contraction rule;\\
            $X, [\pi(Y')]^{(2n+1)} ,Z \to X, [\pi(Y)]^{(2n+1)}, Z$\\
            where $Y\to Y'$ is an expansion rule.
        \end{enumerate}
    \item \textbf{Expansion rules}
        \begin{enumerate}[leftmargin=1em]
            \item[(EXP)] - simple expansion:\\
            $X, Y \to X, p^{(n+1)}, p^{(n)}, Y$;
            \item[($\pi$-EXP)]- augmented expansion:\\
            $X, Y \to X, [\pi(p)]^{(n+1)}, [\pi(p)]^{(n)}, Y$;
            \item[(IND-E)] - induced expansion:\\
            $ X, [\pi(Y)]^{(2n)}, Z \to X, [\pi(Y')]^{(2n)} ,Z$\\
            where $Y \to Y'$ is an expansion rule;\\
            $ X, [\pi(Y')]^{(2n+1)}, Z \to X, [\pi(Y)]^{(2n+1)} ,Z$\\
            where $Y \to Y'$ is a contraction rule.
        \end{enumerate}
    \item \textbf{Special rules} (neither contraction, nor expansion)
        \begin{enumerate}[leftmargin=1em]
            \item[(IND)] - simple induced step:\\
            $X, p^{(2n)}, Y \to X, q^{(2n)}, Y$;\\
            $X, q^{(2n+1)}, Y \to X, p^{(2n+1)}, Y$\\ where $p\leq q$ in $P$.
            \item[($\pi$-IND)] - augmented induced step:\\
            $X, [\pi(Y)]^{(2n)}, Z \to X, [\pi(Y')]^{(2n)}, Z$;\\
            $X, [\pi(Y')]^{(2n+1)}, Z \to X, [\pi(Y)]^{(2n+1)}, Z$\\
            where $Y\to Y'$ is a special rule.
            \item[(m-IND)] - mixed induced step:\\
            $X, Y^{(2n)}, Z \to X, [\pi(Y')]^{(2n)}, Z$;\\
            $X, [\pi(Y')]^{(2n+1)}, Z \to X, Y^{(2n+1)}, Z$\\
            where $Y \to Y'$ is a special rule.
            \item[(PRE)] - precyclic rule:\\
            $X, [\pi(Y)], [\pi(Y')]^\ell, Z \to X, [\pi(Y')]^r, [\pi(Y)], Z$.
        \end{enumerate}
\end{enumerate}

Next define $\Rightarrow$ to be the reflexive and transitive closure of $\to$. On $\P$ there is an equivalence relation on strings: $X \sim Y$ if and only if $X \Rightarrow Y$ and $Y \Rightarrow X$. Equivalence classes of strings $[X]$ can be endowed with a monoid operation: $[X] \cdot [Y] = [X,Y]$, a left adjoint unary operation: $[X]^\ell = [X^\ell]$, a right adjoint unary operation: $[X]^r = [X^r]$, an identity string $1$ equal to the equivalence class of the empty word, and a preorder: $[X] \leq [Y]$ if and only if $X \Rightarrow Y$. 

We have thus endowed $\P$ with a pregroup structure. $\P$ is then called a \emph{free $\pi$-augmented pregroup}. 

\begin{remark}
    If we were to define ordinary pregroups in this way, then following the same procedure, the rewriting rules would be given by a much shorter list:
    \begin{enumerate}[leftmargin=1em]
        \item[(CON)] $X, p^{(n)}, p^{(n+1)}, Y \to X, Y$;
        \item[(EXP)] $X, Y \to X, p^{(n+1)}, p^{(n)}, Y$;
        \item[(IND)] $X, p^{(n)}, Y \to X, q^{(n)}, Y$ if $p \leq q$ ($q \leq p$) and $n$ is even (odd).
    \end{enumerate}

This rewriting system was Lambek's original formulation of the logic of pregroups, otherwise known as Compact Bilinear Logic (see e.g. \cite{buszkowski2003sequent}\cite{casadio2002tale}\cite{lambek2012logic}). 
\end{remark}
\begin{remark}
We make a quick observation about the mixed induced step (m-IND). Since the relation on $\mathcal{P}$ is reflexive, that means we have $p \to p$ for every atom $p \in \mathcal{P}$. Applying (m-IND), we deduce that $p^{(2n)} \to [\pi(p)]^{(2n)}$ and $[\pi(p)]^{(2n+1)}\to p^{(2n+1)}$ for all $n \in \mathbb{Z}$. The given rewriting rules let us conclude that it is ocasionally possible to contract mixed terms in the following way:
\[p^{(2n)} [\pi(p)]^{(2n+1)} \to 1 \quad \quad \text{and}\quad \quad [\pi(p)]^{(2n-1)}p^{(2n)} \to 1.\]

Both of these ``mixed contractions'' can be derived by applying (m-IND) followed by (CON). Note however that $[\pi(p)]\ p^r \centernot\to 1$ as (m-IND) cannot be applied to either $\pi(p)$ or $p^r$. Henceforth, whenever we apply (m-IND) followed by (CON) to contract an eligible mixed term, we will denote this combined step as (m-CON). 
\end{remark}
\section{Normalisation theorem}\label{section 3}
In this section we prove a normalisation theorem for free $\pi$-augmented pregroups akin to the Switching Lemma for free pregroups. In other words we are interested in showing that in any derivation of $X \Rightarrow Y$, all contraction rules may occur before all expansion rules. We proceed to formalise and prove this statement.
\begin{definition}
A derivation $X \Rightarrow Z$ is called normal if there exists some string $Y$ such that $X \Rightarrow Z$ can be expressed as the composition of derivations $X \Rightarrow Y$ and $Y \Rightarrow Z$, where $X \Rightarrow Y$ is derived without expansion rules and $Y \Rightarrow Z$ is derived without contraction rules.
\end{definition}
\begin{theorem}
In a $\pi$-augmented pregroup, every derivation of $X \Rightarrow Y$ of minimal length is a normal derivation.
\end{theorem}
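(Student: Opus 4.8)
The plan is to mimic Lambek's proof of the Switching Lemma by a local critical-pair analysis, arguing that in a shortest derivation no expansion can be forced to precede a contraction with which it interacts, while non-interacting steps may be freely commuted into the normal order. Concretely, I would induct on the length of the derivation and show that whenever it contains an expansion step immediately followed by a contraction step, one of two things occurs: either the two steps act on disjoint parts of the word and therefore commute, producing a derivation of the same length in which the contraction now occurs earlier; or they share a term, in which case the freshly expanded pair telescopes against the contraction and both steps cancel, producing a strictly shorter derivation. Since (EXP) inserts the pattern $p^{(n+1)}, p^{(n)}$ while (CON) removes the pattern $p^{(n)}, p^{(n+1)}$, the inserted pair can never be contracted with itself, so every interaction is mediated by a neighbouring term and, by direct computation, collapses back to the word present before the expansion. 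A shortest derivation thus contains no interacting inversion, and after commuting the remaining independent steps one obtains the required split into a contraction-only phase followed by an expansion-only phase; a counter given by the number of pairs ``expansion earlier than contraction'' makes this reordering terminate. The same argument applies verbatim to $(\pi\text{-CON})$ and $(\pi\text{-EXP})$, since these are just (CON) and (EXP) performed on the bracketed letters $[\pi(p)]^{(n)}$, which may be treated as ordinary atoms for the critical-pair analysis.

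The genuinely new phenomena are the \emph{induced} rules and the \emph{special} rules. For (IND-C) and (IND-E) I would proceed by a nested structural induction on the bracket-nesting depth of the words involved: such a step is really a contraction or expansion performed on the interior $Y$ of a bracket $[\pi(Y)]^{(m)}$, to which the induction hypothesis may be applied to bring the interior derivation into normal form. The subtlety here is the parity flip built into the rules: when the bracket sits at an even index a top-level contraction corresponds to an interior contraction, but when it sits at an odd index a top-level contraction corresponds to an interior \emph{expansion}, and conversely. Consequently the induction hypothesis must be stated in a self-dual way, asserting simultaneously the existence of a normal form (contractions before expansions) and of an \emph{anti-normal} form (expansions before contractions); the two are interchanged by reversing the derivation, so this costs nothing, but it must be carried explicitly so that an odd-indexed bracket can absorb a normal outer derivation as an anti-normal inner one.

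The main obstacle is the precyclic rule (PRE), together with the permuting steps (IND), $(\pi\text{-IND})$, and (m-IND) that are neither contractions nor expansions. The monotonicity-type special steps commute with (CON) and (EXP) up to the obvious replacement of an atom by a larger one, and can be shown either to slide past an adjacent contraction or expansion or to be absorbed into it, so that they gather harmlessly on either side of the split. The rule (PRE), however, physically reorders two bracketed terms via $[\pi(Y)], [\pi(Y')]^\ell \to [\pi(Y')]^r, [\pi(Y)]$, and can thereby bring a term introduced by an expansion into contact with a partner, creating a contraction that was not previously available. I would therefore devote the crux of the proof to a commutation analysis of (PRE) against neighbouring contractions and expansions, showing that any contraction unlocked by a precyclic move, together with the expansion that produced the moved term, still either telescopes or can be reordered so that the contraction is performed first; in effect one verifies that (PRE) generates no irreducible expansion-before-contraction inversion. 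Establishing this confluence-style compatibility of (PRE) with the contraction/expansion ordering, while simultaneously maintaining the parity bookkeeping inside brackets, is the step I expect to be the most delicate and the one carrying the bulk of the casework.
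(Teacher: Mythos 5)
Your overall strategy---induction on the length of the derivation, commuting independent steps, and analysing the interaction between an expansion and the first contraction it feeds---is the same as the paper's (both follow Lambek's Switching Lemma proof). The problem is that your central dichotomy, ``commute if disjoint, telescope-and-cancel if interacting,'' is too coarse, and the places where you defer the work are precisely where the paper's proof has its actual content. In a minimal derivation the interacting expansion and contraction are in general \emph{not} adjacent: they are separated by (IND)/($\pi$-IND)/(m-IND) steps that act on the freshly expanded terms, and these can be slid neither before the expansion (their target does not yet exist) nor after the contraction (their target is gone), so ``gather harmlessly on either side of the split'' fails as literally stated. Nor does the block cancel: what actually happens, and what the paper verifies case by case, is that a block such as (EXP) followed by $k$ (IND) steps followed by (CON), namely $U, q_0^{(2n)}, V \to U, q_0^{(2n)}, q_k^{(2n+1)}, q_k^{(2n)}, V \Rightarrow U, q_0^{(2n)}, q_0^{(2n+1)}, q_k^{(2n)}, V \to U, q_k^{(2n)}, V$, collapses to the chain of $k$ induced steps $U, q_0^{(2n)}, V \Rightarrow U, q_k^{(2n)}, V$ acting on the \emph{neighbouring} term, the order chain $q_0 \leq \dots \leq q_k$ being transferred from the expanded term to the surviving one; the mixed blocks ((EXP) against ($\pi$-CON), (IND-E) against (CON), and so on) additionally need the (m-IND) rule to pass between decorated and undecorated terms. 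This residue-of-induced-steps computation, in all nine $A_0/A_1$ combinations, is the substance of the proof and is absent from your sketch; it is recoverable via Lambek-style generalised contractions/expansions, but you would still have to do it.

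Two further points. First, the (PRE) case, which you rightly single out as the crux, is left entirely as a declaration of intent (``I would devote the crux of the proof to a commutation analysis\dots''), so the attempt is incomplete exactly where you locate the difficulty; it is worth noting that the paper itself dispatches (PRE) rather briefly (Case V, plus a remark in Case III.2 that (PRE) steps may be pushed to the start of the block), so your instinct that this rule deserves the most care is sound---arguably sounder than the paper's own treatment---but an instinct is not a proof. Second, your self-dual induction hypothesis is justified by ``reversing the derivation,'' which is not available here: the rewrite relation is not symmetric, since reversing (CON), which deletes the pattern $p^{(n)}, p^{(n+1)}$, would insert the pair in that order, and this is not an instance of (EXP), which inserts $p^{(n+1)}, p^{(n)}$. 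The duality you need is instead already built into the parity flip of (IND-C)/(IND-E): a chain of such steps on an odd-indexed bracket corresponds to the \emph{reversed} chain of inner rules with contractions and expansions exchanged, and under that correspondence normal chains map to normal chains, so the ``anti-normal'' bookkeeping is unnecessary rather than merely cheap.
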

\begin{proof}
Let $X_0 \to X_1 \to \ldots \to X_n$ be a derivation of $X \Rightarrow Y$ of length $N$ such that $X = X_0$, $Y = X_N$, and every step $X_{i-1} \to X_i$ is one of the rewriting rules of a $\pi$-augmented pregroup. We show that every such derivation can be written as a normal derivation of length at most $N$. 

We proceed by induction on $N$. For $N=0$ and $N=1$, the statement is clearly true. Now for $N\geq 2$, the derivation $X_1 \to X_2 \to \ldots \to X_N$ is shorter than the initial one and it can thus be transformed into a normal derivation $Y_1 \to Y_2 \to \ldots \to Y_m$ where $Y_1 = X_1$, $Y_m = X_N$, and $m \leq N$. Suppose that $m < N$. Then $X_0 \to Y_1 \to Y_2 \to \ldots \to Y_m$ is a derivation of $X \Rightarrow Y$ of length less than $N$, so by the induction hypothesis this can be transformed into a normal derivation. 

It remains to deal with the situation when $N = m$. We are interested in the nature of the first step $X_0 \to X_1$. Problems may only arise if this step is an expansion rule. We focus on the portion of the derivation $X_0 \to X_1 \ldots \to X_\ell$ where $X_0 \to X_1$ is an expansion rule and $X_{\ell-1} \to X_\ell$ is the first contraction rule. If the contraction rule occurs immediately after the expansion rule and they are independent, then we may switch the order in which they occur in the derivation and then we are done. If any of the special rules that occur between $X_0 \to X_1$ and $X_{\ell-1} \to X_\ell$ are independent of these two steps then we may move them either before $X_0 \to X_1$ or after $X_{\ell-1} \to X_\ell$, which renders a contradiction to minimality. So we assume that all steps are dependent in the portion of the derivation that we analyse. 

The rule applied to the first step $X_0 \to X_1$ is denoted $A_0$ and the rule applied to step $X_{\ell-1} \to X_\ell$ is denoted $A_1$. The following cases need to be analysed:

\begin{enumerate}[leftmargin=1em]
    \item[II.1.] $A_0=$(EXP), $A_1=$(CON)
    \item[II.2.] $A_0=$(EXP), $A_1=$($\pi$-CON)
    \item[II.3.] $A_0=$(EXP), $A_1=$(IND-C)
    \item[III.1.] $A_0=$($\pi$-EXP), $A_1=$(CON)
    \item[III.2.] $A_0=$($\pi$-EXP), $A_1=$($\pi$-CON)
    \item[III.3.] $A_0=$($\pi$-EXP), $A_1=$(IND-C)
    \item[IV.1.] $A_0=$(IND-E), $A_1=$(CON)
    \item[IV.2.] $A_0=$(IND-E), $A_1=$($\pi$-CON)
    \item[IV.3.] $A_0=$(IND-E), $A_1=$(IND-C)
\end{enumerate}

We present some sample analyses.
\begin{enumerate}
    \item[Case I.] $X_0 \to X_1$ is a contraction rule. Then $X_0 \to Y_1 \to \ldots \to Y_m$ is a normal derivation of $X \Rightarrow Y$ and we are done.
    \item[Case II.] $X_0 \to X_1$ is an (EXP) rule, say, $U, V \to U, p^{(n+1)}, p^{(n)}, V$. 
    \item[Case II.1.] Suppose there are no contraction rules being applied to terms of the form $p^{(n+1)}, p^{(n)}$ in $Y_1 \to Y_2 \to \ldots \to Y_m$. Then we skip the first step and drop all the terms $p^{(n+1)}, p^{(n)}$ from the types appearing in the contraction rules of $Y_1 \to Y_2 \to \ldots \to Y_m$. Then add the terms $p^{(n+1)}, p^{(n)}$ back with a single instance of (EXP), then continue on with the expansion rules in the sequence. This yields a normal derivation of length $N$.
    \item[Case II.2.] Suppose there is a (CON) step applied to the term $p^{(n)}$ later on in the derivation. Such a derivation can have the form
    \begin{align*}
        U, q_0^{(2n)}, V &\to U, q_0^{(2n)}, q_k^{(2n+1)}, q_k^{(2n)}, V &&(EXP)\\
        &\to U, q_0^{(2n)}, q_{k-1}^{(2n+1)}, q_k^{(2n)}, V &&(IND)\\
        &\to U, q_0^{(2n)}, q_{k-2}^{(2n+1)}, q_k^{(2n)}, V &&(IND)\\
        &\to \cdots\\
        &\to U, q_0^{(2n)}, q_0^{(2n+1)}, q_k^{(2n)}, V &&(IND)\\
        &\to U, q_k^{(2n)}, V &&(CON)
    \end{align*}
    where $q_0 \leq q_1 \leq \ldots \leq q_k$ holds in $P$. This is a derivation of length $k+2$ consisting of one (EXP) step, one (CON) step and $k$ (IND) steps. 
    
    We can transform this derivation in the following manner.
    \begin{align*}
        U, q_0^{(2n)}, V &\to U, q_1^{(2n)}, V &&(IND)\\
        &\to U, q_2^{(2n)}, V &&(IND)\\
        &\to \cdots\\
        &\to U, q_k^{(2n)}, V &&(IND)
    \end{align*}
    This derivation has no expansions occurring before contractions and has length $k$, which makes it shorter than the original derivation. Combining this observation with the induction hypothesis, we obtain a normal derivation. 
    \item[Case III.] $X_0 \to X_1$ is a ($\pi$-EXP) rule, say $U, V \to U, [\pi(p)]^{(n+1)}, [\pi(p)]^{(n)}, V$. 
    \item[Case III.1.] Suppose there is a (CON) step later on in the sequence. The relevant part of the derivation can take the form:
        \begin{align*}
            U, q_0^{(2n)}, V &\to U, q_0^{(2n)}, [\pi(q_k)]^{(2n+1)}, [\pi(q_k)]^{(2n)}, V &&(\pi-EXP)\\
            &\to U, q_0^{(2n)}, [\pi(q_{k-1})]^{(2n+1)}, [\pi(q_k)]^{(2n)}, V &&(\pi-IND)\\
            &\to U, q_0^{(2n)}, [\pi(q_{k-2})]^{(2n+1)}, [\pi(q_k)]^{(2n)}, V &&(\pi-IND)\\
            &\to \cdots\\
            &\to U, q_0^{(2n)}, q_0^{(2n+1)}, [\pi(q_k)]^{(2n)}, V &&(m-IND)\\
            &\to U, [\pi(q_k)]^{(2n)}, V &&(CON)
        \end{align*}
    where $q_0\leq q_1 \leq \ldots \leq q_k$ in $P$. This derivation has length $k+2$ and can be transformed to:
        \begin{align*}
            U, q_0^{(2n)}, V &\to U, q_1^{(2n)}, V &&(IND)\\
            &\to U, q_2^{(2n)}, V &&(IND)\\
            &\to \cdots\\
            &\to U, q_{k-1}^{(2n)}, V &&(IND)\\
            &\to U, \pi(q_k)^{(2n)}, V &&(m-IND)
        \end{align*}
    The new derivation has length $k$ and only uses induced steps. So we have a normal derivation.
    \item[Case III.2.] Suppose there is a ($\pi$-CON) step which cancels out one of the terms produced in the ($\pi$-EXP) step. The relevant portion of the derivation can take the shape:
        \begin{align*}
        &U, [\pi(q_0)]^{(2n)}, V\\
        &\to U, [\pi(q_0)]^{(2n)}, [\pi(q_k)]^{(2n+1)}, [\pi(q_k)]^{(2n)}, V &&(\pi-EXP)\\
        &\to U, [\pi(q_0)]^{(2n)}, [\pi(q_{k-1})]^{(2n+1)}, [\pi(q_k)]^{(2n)}, V &&(\pi-IND)\\
        &\to U, [\pi(q_0)]^{(2n)}, [\pi(q_{k-2})]^{(2n+1)}, [\pi(q_k)]^{(2n)}, V &&(\pi-IND)\\
        &\to \cdots\\
        &\to U, [\pi(q_0)]^{(2n)}, [\pi(q_0)]^{(2n+1)}, [\pi(q_k)]^{(2n)}, V &&(\pi-IND)\\
        &\to U, \pi(q_k)^{(2n)}, V &&(\pi-CON)
        \end{align*}
        where once again $q_0 \leq q_1 \leq \ldots \leq q_k$ holds in $P$. This is a derivation of length $k+2$. Any (PRE) steps that are necessary are performed after ($\pi$-EXP) and before ($\pi$-CON) by assumption. This can again be shortened to:
        \begin{align*}
            U, [\pi(q_0)]^{(2n)}, V &\to U, [\pi(q_1)]^{(2n)}, V &&(\pi-IND)\\
            &\to U, [\pi(q_2)]^{(2n)}, V &&(\pi-IND)\\
            &\to \cdots\\
            &\to U, [\pi(q_k)]^{(2n)}, V &&(\pi-IND)
        \end{align*}
        This derivation consists of $k$ ($\pi$-IND) steps. If any (PRE) rules are used these may be assumed to occur at the very beginning without affecting the final result. Using the induction hypothesis again, we obtain that such a minimal derivation of $X \Rightarrow Y$ must be normal.
    \item[Case III.3.] Suppose there is a (IND-C) step at the end of the sequence $X_0\to X_1$. This means that we start with a derivation of the form \[U, V \to U, [\pi(p)]^{(2n+1)}, [\pi(p)]^{(2n)}, V,\] which is the ($\pi$-EXP) step. This then concludes in $X_1$ with a derivation of the form \[U', [\pi(p)]^{(2n+1)}, [\pi(p)]^{(2n)}, V' \to U', [\pi(p)]^{(2n+1)}, [\pi(q)]^{(2n)}, V',\] where $p \leq q$ is an expansion rule. To put $X \Rightarrow Y$ into normal form, skip the first step $X_0 \to X_1$. Then derive the rest of the sequence $X_2 \to X_N$ as before. Finally, add the final steps $X_N \to X_{N+1} \to X_{N+2}$ which will first use a ($\pi$-EXP) step to introduce decorated $q$-terms as such: \[\overline{U}, \overline{V} \to \overline{U}, [\pi(q)]^{(2n+1)}, [\pi(q)]^{(2n)}, \overline{V}.\] In the final step, we use (IND-E) to obtain \[\overline{U}', [\pi(q)]^{(2n+1)}, [\pi(q)]^{(2n)}, \overline{V}' \to \overline{U}', [\pi(p)]^{(2n+1)}, [\pi(q)]^{(2n)}, \overline{V}'.\]  

    Now all expansions occur after contractions. Thus the derivation $X_2 \to X_3 \to \ldots \to X_N \to X_{N+1} \to X_{N+2}$ as described above is a normal derivation of length $N$ for $X \Rightarrow Y$. 
    \item[Case IV.] $X_0 \to X_1$ is a (IND-E) step, say $U, [\pi(Y)]^{(2n)}, V \to U, [\pi(Y')]^{(2n)}, V$ where $Y \to Y'$ is an expansion rule.
    \item[Case IV.1.] Suppose there is a (CON) step later in the sequence that cancels out a term produced in the first step. The derivation may take the form 
    \begin{align*}
        U, [\pi(q_0)]^{(2n)}, q_k^{(2n+1)}, V &\to U, [\pi(q_1)]^{(2n)}, q_k^{(2n+1)}, V &&(IND-E)\\
        &\to U, [\pi(q_2)]^{(2n)}, q_k^{(2n+1)}, V &&(\pi-IND)\\
        &\to \cdots && \\
        &\to U, [\pi(q_{k-1})]^{(2n)}, q_k^{(2n+1)}, V &&(\pi-IND)\\
        &\to U, q_k^{(2n)}, q_k^{(2n+1)}, V &&(m-IND)\\
        &\to U, V &&(CON)
    \end{align*}
    where $q_0 \leq q_1$ is an expansion rule and $q_1 \leq q_2 \leq \ldots \leq q_k$ holds in $P$ as a sequence of special rules. This is a derivation of length $k+1$. We can rewrite this as a normal derivation in the following way:
    \begin{align*}
    U, [\pi(q_0)]^{(2n)}, q_k^{(2n+1)}, V &\to U, [\pi(q_0)]^{(2n)}, q_{k-1}^{(2n+1)}, V &&(IND)\\
    &\to U, [\pi(q_0)]^{(2n)}, q_{k-2}^{(2n+1)}, V &&(IND)\\
    &\to \cdots && \\
    &\to U, [\pi(q_0)]^{(2n)}, [\pi(q_{1})]^{(2n+1)}, V &&(\pi-IND)\\
    &\to U, [\pi(q_0)]^{(2n)}, [\pi(q_{0})]^{(2n+1)}, V &&(IND-C)\\
    &\to U,V &&(\pi-CON)
    \end{align*}
    This is a derivation of length $k+1$ which doesn't feature any expansion rules. Together with the induction hypothesis, this will render a normal derivation of $X \Rightarrow Y$. 
    \item[Case IV.2.] Suppose there is a ($\pi$-CON) step featured at the end of the sequence. A derivation of this type can take the form:
    \begin{align*}
       &U, [\pi(q_0)]^{(2n)}, [\pi(q_k)]^{(2n+1)}, V\\
       &\to U, [\pi(q_1)]^{(2n)}, [\pi(q_k)]^{(2n+1)}, V &&(IND-E)\\
       &\to U, [\pi(q_2)]^{(2n)}, [\pi(q_k)]^{(2n+1)}, V &&(\pi-IND)\\
       &\to \cdots &&(\pi-IND)\\
       &\to U, [\pi(q_k)]^{(2n)}, [\pi(q_k)]^{(2n+1)}, V &&(\pi-IND)\\
       &\to U, V &&(\pi-CON)      
    \end{align*}
    This is a derivation of length $k+1$ where $q_0\leq q_1$ is an expansion rule and $q_1\leq q_2 \leq \ldots \leq q_k$ holds in $P$ as a sequence of special rules. We can rewrite this in normal form:
    \begin{align*}
        &U, [\pi(q_0)]^{(2n)}, [\pi(q_k)]^{(2n+1)}, V\\
        &\to U, [\pi(q_0)]^{(2n)}, [\pi(q_{k-1})]^{(2n+1)}, V &&(\pi-IND)\\
        &\to U, [\pi(q_0)]^{(2n)}, [\pi(q_{k-2})]^{(2n+1)}, V &&(\pi-IND)\\   
        &\to \cdots &&(\pi-IND)\\
        &\to U, [\pi(q_0)]^{(2n)}, [\pi(q_{1})]^{(2n+1)}, V &&(\pi-IND)\\
        &\to U, [\pi(q_0)]^{(2n)}, [\pi(q_{0})]^{(2n+1)}, V &&(IND-C)\\
        &\to U, V &&(\pi-CON)
    \end{align*}
    This new sequence has equal length to the previous one but features no expansion rules. Combining this with the induction hypothesis yields a normal derivation of $X \Rightarrow Y$. 
    \item[Case V.] $X_0 \to X_1$ is a (PRE) step. This is neither a contraction rule nor an expansion rule. Since $X_1 \to \ldots X_m$ is a normal derivation, then (PRE) must be a special rule that contributes to a generalised contraction. Hence the derivation is normal.  
\end{enumerate}
\end{proof}

\begin{corollary}
Given a nullable string $X$ in a $\pi$-augmented pregroup, $X \leq 1$ can be verified without using expansion rules.
\end{corollary}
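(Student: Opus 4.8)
The plan is to read the corollary off directly from the normalisation theorem by a simple length argument. Since $X$ is nullable we have $X \Rightarrow 1$, so I would fix a derivation of $X \Rightarrow 1$ of minimal length. By the normalisation theorem this derivation is normal, hence it factors as $X \Rightarrow U$ followed by $U \Rightarrow 1$, where the first part uses no expansion rules and the second part uses no contraction rules. It then suffices to prove that $U$ is the empty word, for in that case the first part alone is an expansion-free derivation witnessing $X \leq 1$, which is exactly the claim.

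To show $U = 1$, I would introduce a weight $w(X)$ equal to the total number of atom occurrences in $X$, counted with multiplicity and including every atom nested inside a $\pi$-bracket. The crux is to track how each rewriting rule affects $w$. First I would check that the three contraction rules (CON), ($\pi$-CON) and (IND-C) each strictly decrease $w$ by $2$: this is clear for the first two, and for (IND-C) it follows because the inner rewrite $Y \to Y'$ is itself a contraction. Dually, (EXP), ($\pi$-EXP) and (IND-E) each increase $w$ by $2$. Finally, the special rules (IND), ($\pi$-IND), (m-IND) and (PRE) all leave $w$ unchanged: (IND) swaps one atom for another; ($\pi$-IND) and (m-IND) apply a $w$-preserving special rule inside a bracket and merely adjust the decoration; and (PRE) only permutes the two bracketed blocks $[\pi(Y)]$ and $[\pi(Y')]^\ell$, so the multiset of atoms is preserved.

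With this bookkeeping in place the conclusion is immediate. Along the second part $U \Rightarrow 1$ no contraction rule is used, so every step either preserves $w$ (the special and induced-expansion steps) or increases it (the expansion steps); in particular $w$ is non-decreasing along $U \Rightarrow 1$. Hence $w(U) \leq w(1) = 0$, and since $w$ vanishes only on the empty word we conclude $U = 1$. Therefore $X \Rightarrow U = 1$ is an expansion-free derivation of $X \leq 1$.

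I expect the only real work to be the rule-by-rule verification that $w$ behaves as stated, and the subtle cases are precisely the rules that act inside a $\pi$-bracket or rewrap a term: (IND-C) and (IND-E), whose effect on $w$ is dictated by the inner contraction or expansion, and (m-IND) together with (PRE), where one must confirm that rebracketing and permuting leave the total atom count untouched. Once these are settled, the monotonicity of $w$ along a contraction-free derivation does all the remaining work.
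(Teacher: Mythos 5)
Your proposal is correct and follows what is essentially the paper's own (implicit) route: the paper states this corollary without proof as a direct consequence of the normalisation theorem, and your argument --- take a minimal-length derivation of $X \Rightarrow 1$, normalise it into $X \Rightarrow U$ (expansion-free) followed by $U \Rightarrow 1$ (contraction-free), then use an atom-count that contractions decrease, expansions increase, and special rules preserve to force $U = 1$ --- is precisely the standard way to fill in that step. The only hair to split is that in the free construction rule (4) formally admits atom-free terms like $[\pi(1)]^{(n)}$, so ``$w$ vanishes only on the empty word'' needs the one-line supplement that no rewriting rule can ever remove such a term (all rules act on atoms or non-empty bracketed words), hence they cannot occur in any derivation witnessing nullability.
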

\begin{eg}
Consider the free $\pi$-augmented pregroup $\P$ generated by the poset $P = (\{p,q\}, \to)$ with no relations. The string $pq^{rr}p^\ell q^r$ is not nullable in $\P$, but the string $p[\pi(q)]^{rr}[\pi(p)]^\ell q^r$ is nullable. We have the following derivation:
    \begin{align*}
        p[\pi(q)]^{rr}[\pi(p)]^\ell q^r &\to p [\pi(p)]^r[\pi(q)]^{rr}q^r &&(PRE)\\
        &\to [\pi(q)]^{rr}q^r &&(m-CON)\\
        &\to 1 &&(m-CON)
    \end{align*}
\end{eg}
\section{Application to linguistics}\label{section 4}
\subsection{Clitic movement in Italian}
In many languages, subjects and complements can often attach themselves before or after a verb in the form of clitic pronouns with a potential change in word order.

\begin{eg} Consider the following example sentences in Italian.
\digloss[ex]{Gianni vede Maria.}{Gianni sees Maria.}{}
\digloss[ex]{Gianni la vede.}{Gianni her sees.}{}

In the second sentence, the direct object ``Maria'' is replaced by the pre-verbal clitic pronoun ``la.''
\end{eg}

 Following the literature \cite{casadio2007applying}\cite{casadio2001algebraic}\cite{casadio9tupled} we model Italian grammar using a free pregroup $\mathcal{I}$ generated by the atomic types: $n$, $p$, $s$, $o$, $w$, $\lambda$, which represent nouns, pronouns, sentences, direct objects, indirect objects, and locative phrases, respectively. We denote the partial order by the symbol $\to$. 

Additionally, we consider the types $\overline{o}$, $\overline{w}$, and $\overline{\lambda}$ which represent the clitic forms of direct objects, indirect objects, and locative phrases, respectively. We impose the following equivalences $\overline{o} \Leftrightarrow o$, $\overline{w} \Leftrightarrow w$, $\overline{\lambda} \Leftrightarrow \lambda$ and also the reduction $n \to p$. The latter is particularly significant for pro-drop languages such as Italian.

In this framework, we can analyse our two example sentences to check for grammaticality. As before, a string $X$ is grammatically acceptable if and only if it reduces to the sentence type, i.e. $X \to s$.

For the first sentence, ``Gianni'' is a noun and is thus assigned type $n$, ``vede'' is a transitive verb with type $p^r s o^\ell$, and ``Maria'' is the direct object with type $o$. We have the reduction:
    \begin{align*}
        n (p^r s o^\ell) o &= (n p^r) s (o^\ell o) &&\text{associativity}\\
        &\to (p p^r) s (o^\ell o) &&(IND): n\to p\\
        &\to s (o^\ell o) &&(CON)\\
        &\to s &&(CON)
    \end{align*}
    
In the second sentence, ``la'' is a clitic pronoun replacing a direct object, hence has type $\overline{o}$. The verb ``vede'' must also change its type to $\overline{o}^r p^r s$. 

In \cite{casadio2009clitic} Casadio and Sadrzadeh show that pre-verbal citicisation in Italian satisfies (usually) the left precyclic rule \[qp^\ell \Rightarrow \overline{p}^r q.\]

Since adding this axiom to the list of axioms of a pregroup would collapse the pregroup into a partially ordered group, we can employ the concept of a $\pi$-augmented pregroup to completely formalise clitic movement. This comes with some minor modifications to grammatical typing.

A transitive verb with preverbal cliticisation will be typed $\pi(p)^r \pi(s) \pi(o)^{\ell}$ and its direct object will be $\pi(o)$. 
For a verb with both a direct and an indirect object that may undergo preverbal cliticisation, the type will be $\pi(p)^r \pi(s) \pi(o)^{\ell
}\pi(w)^{\ell}$ and direct and indirect objects will have types $\pi(o)$ and $\pi(w)$ instead of simply $o$ and $w$. 

For post verbal cliticisation, we do not decorate any of the types with the symbol $\pi$ as the precyclic rule is not needed to parse such clitics. 

We now apply $\pi$-augmented pregroups to Italian clitics. The examples are mostly sourced from \cite{casadio2009clitic}. The paper formulated precyclic axioms as ``metarules''. Our method demonstrates that we can incorporate clitic movement into the axioms and derive cliticisation in an algebraic way without appealing to ad-hoc metarules.

First, the sentence ``Gianni vede Maria'' can be transformed into ``Gianni la vede'' via the following sequence of reductions:
    \begin{align*}
        &n \left( \pi(p)^r \pi(s) \pi(o)^\ell \right) \underline{\pi(o)}\\
        &= n \left( \pi(p)^r \pi(s) \pi(o)^\ell \right) (\pi(o)^r)^\ell &&\text{since }(q^r)^\ell = q\\
        &= n \left(\pi(p)^r \pi(s)\right)\left(\underline{\pi(o)^\ell(\pi(o)^r)^\ell}\right) &&\text{associativity}\\
        &= n \left(\underline{\pi(p)^r \pi(s)}\right)\left(\underline{\pi(o)^r \pi(o)}\right)^\ell &&\text{since } p^\ell q^\ell = (qp)^\ell\\
        &\to n \left( \underline{\pi(o)^r \pi(o)}\right)^r \left(\pi(p)^r\pi(s)\right) &&(PRE)\\
        &= n\ \underline{\pi(o)^r \pi(o)^{rr}} \pi(p)^r \pi(s) &&\text{since } (pq)^r = q^rp^r\\
        &\to n\ \pi(\overline{o})^r \pi(\overline{o})^{rr} \pi(p)^r \pi(s) &&(\pi-IND):\overline{o} \Leftrightarrow o
    \end{align*}
    
This concludes the conversion of ``Gianni vede Maria'' to ``Gianni la vede,'' where ``Gianni'' has type $n$, ``la'' has type $\pi(\overline{o})^r$, and ``vede'' has type $\pi(\overline{o})^{rr} \pi(p)^r \pi(s)$. It is straightforward to check that the new sentence is grammatically acceptable. 

\begin{eg}
We consider an example where the verb takes both a direct and an indirect object. 

\trigloss[ex]{Niccolo {\ \ \ \ \ \ \ \ \ \ gives} {a book} {to Ludovica.}}{Niccolo {\ \ \ \ \ \ \ \ \ \ \ \ da} {un libro} {a Ludovica.}}{$\ \ \ \ n$ $\pi(p)^r\pi(s)\pi(w)^\ell\pi(o)^\ell$ $\ \ \ \ \pi(o)$ $\ \ \ \ \ \ \pi(w)$}{}

Here ``un libro'' is the direct object and ``a Ludovica'' is an indirect object. 
\end{eg}

We now apply the logic of $\pi$-augmented pregroups to derive the sentence type where both objects become pre-verbal clitic pronouns. 

\begin{align*}
    &n \pi(p)^r\pi(s)\pi(w)^\ell\pi(o)^\ell\pi(o)\pi(w)\\
    &= n \underline{\pi(p)^r\pi(s)\left(\pi(o)\pi(w)\right)^\ell}\pi(o)\pi(w) &&p^\ell q^\ell = (qp)^\ell\\
    &\to n \left(\pi(o)\pi(w)\right)^r\pi(p)^r\pi(s)\pi(o)\underline{\pi(w)} &&(PRE)\\
    &= n \left(\pi(o)\pi(w)\right)^r\pi(p)^r\pi(s)\underline{(\pi(o)^r)^\ell(\pi(w)^r)^\ell} &&q = (q^r)^\ell\\
    &= n \left(\pi(o)\pi(w)\right)^r\underline{\left(\pi(p)^r\pi(s)\right)\left(\pi(w)^r\pi(o)^r\right)^\ell} &&p^\ell q^\ell = (qp)^\ell\\
    &\to n \left(\pi(o)\pi(w)\right)^r \left(\pi(w)^r\pi(o)^r\right)^r \left(\pi(p)^r\pi(s)\right) &&(PRE)\\
    &= n \pi(w)^r\ \pi(o)^r\ \left(\pi(o)^{rr} \pi(w)^{rr} \pi(p)^r \pi(s)\right) &&\text{pregroup rules}\\
    &\to n \pi(\overline{w})^r\ \pi(\overline{o})^r\ \left(\pi(\overline{o})^{rr} \pi(\overline{w})^{rr} \pi(p)^r \pi(s)\right) &&(\pi-IND): o\Leftrightarrow \overline{o}, w \Leftrightarrow \overline{w}
\end{align*}

The precyclic rule is often performed to substrings rather than individual terms. For example, in the first line, in order to apply $qp^\ell \Rightarrow p^rq$, we take $q = \pi(p)^r\pi(s)$ and $p^\ell = (\pi(o)\pi(w))^\ell$. In the final step, we use the augmented induced step rule starting from $\overline{w}\to w$ to derive $\pi(w)^r \to \pi(\overline{w})^r$, where we use the fact that odd adjoints act contravariantly on the preorder.  We thus obtain the cliticised sentence:

\trigloss[ex]{Niccolo {to her} {\ \ \ it} {\ \ \ \ \ \ \ gives}}{Niccolo {\ \ glie} {\ \ \ lo} {\ \ \ \ \ \ \ \ \ da}}{$\ \ \ \ n$ $\pi(\overline{w})^r$ $\pi(\overline{o})^r$ $\pi(\overline{o})^{rr}\pi(\overline{w})^{rr}\pi(p)^r\pi(s)$}{}

The subject ``Niccolo'' has type $n$, the cliticised indirect object pronoun ``glie''(to her, i.e. to Ludovica) has type $\pi(\overline{w})^r$, the cliticised direct object pronoun ``lo'' (it, i.e. the book) has type $\pi(\overline{o})^r$ and the verb ``da'' has type $\pi(\overline{o})^{rr} \pi(\overline{w})^{rr} \pi(p)^r \pi(s)$. For the sake of completeness, we show that the string associated to the cliticised sentence is grammatically acceptable. \begin{align*}
    &n \pi(\overline{w})^r\ \underline{\pi(\overline{o})^r\ \pi(\overline{o})^{rr}} \pi(\overline{w})^{rr} \pi(p)^r \pi(s)\\
    &\to n \underline{\pi(\overline{w})^r \pi(\overline{w})^{rr}} \pi(p)^r\pi(s) &&(\pi-CON) \\
    &\to \underline{n} \pi(p)^r \pi(s) &&(\pi-CON) \\
    &\to \underline{p \pi(p)^r} \pi(s) &&(IND):n\to p\\
    &\to \pi(s) &&(m-CON)\\
    &\to s &&(m-IND)
\end{align*}

In the last step we apply the mixed induced step rule starting from $s \to s$, which holds by virtue of having taken the reflexive closure of our partial order.

\begin{eg}
Let's consider a sentence with post-verbal cliticisation. 
\trigloss[ex]{Ludovica wants {to see} Chiara.}{Ludovica vuole vedere Chiara.}{$\ \ \ \ n$ $p^rsi^\ell$ $\ \ io^\ell$ $\ \ \ o$}{}
Here, $i$ is the atomic type representing infinitive verbs. Note that none of the types in this string are decorated with the symbol $\pi$. In the case of post-verbal clitics, we can deduce the cliticisation simply from the rule $o \Leftrightarrow \overline{o}$ without the need for precyclic rules. 

The cliticised sentence becomes:

\trigloss[ex]{Ludovica {wants} {to see.her}}{Ludovica {vuole} {veder.la}}{$\ \ \ n$ $p^rsi^\ell$ $i\overline{o}^\ell\overline{o}$}{}

This brings up the question whether transitive verbs need to carry two different types in the pregroup, one where atoms are decorated with the symbol $\pi$ and one where they are not, depending on whether we are dealing with pre-verbal or post-verbal cliticisation. The author proposes that all verbs and their objects be decorated with the symbol $\pi$.
\end{eg}

With this in mind, the sentence ``Ludovica vuole vedere Chiara'' will have type \[n\ \pi(p)^r \pi(s) \pi(i)^\ell\ \pi(i) \pi(o)^\ell \ \pi(o),\]
while the sentence ``Ludovica vuole veder.la'' will have type \[n\ \pi(p)^r \pi(s) \pi(i)^\ell\ \pi(i)\pi(\overline{o})^\ell \pi(\overline{o}).\]

Since $n$ reduces to $p$, we can see, for instance, that the first sentence is grammatically acceptable after applying three contraction rules. Using the precyclic rule anywhere to shift around either of the terms $\pi(i)^\ell$ and $\pi(\overline{o})^\ell$ will render a grammatically unacceptable sentence. Hence decorating all verbal complements by default is a reasonable convention. 

\subsection{Clitic movement in Farsi}
Following \cite{sadrzadeh2007persian}, we model Farsi grammar using the $\pi$-augmented pregroup $(\mathcal{F}, \to)$ which is generated by the atoms $n$, $p$, $o$, $s$, $q$, which represent the types of nouns, personal pronouns, direct objects, sentences, and questions, respectively. We also add $\overline{o}$ and $\overline{p}$ to represent the clitic forms of direct objects and personal pronouns, and we impose the equivalences: $\overline{p} \Leftrightarrow p$, $\overline{o} \Leftrightarrow o$.

Clitic movement in Farsi follows right precyclic rules. So in our $\pi$-augmented pregroup, the rule (PRE) is replaced by the rule \[\text{(R-PRE)}\quad \quad  X, [\pi(Y)]^r, \pi(Y'), Z \to X, \pi(Y'), [\pi(Y)]^\ell, Z.\]

\begin{eg}
    Consider the following example of cliticisation in Farsi. The first sentence contains both the subject ``Hassan'' and the direct object ``Nadia'' explicitly. The second sentence is an example of partial cliticisation where ``Nadia'' gets replaced by the post-verbal clitic ``ash.'' The third sentence contains the fully cliticised sentence where both the subject and the object are substituted by the post-verbal clitic pronouns ``d'' and ``ash.''
    \trigloss[ex]{Hassan Nadia-ra {   did}}{Hassan Nadia {   saw}}{$\ \ \ \ p$ $\ \ \ o$ $(o^rp^rs)$}{}
    \trigloss[ex]{Hassan did ash}{Hassan saw her}{$p$ $p^{r}s\overline{o}^\ell$ $\overline{o}$}{}
    \trigloss[ex]{di d ash}{saw he her}{$(s\overline{o}^{\ell}\overline{p}^{\ell})$ $\ \overline{p}$ $\ \overline{o}$}{}
\end{eg}

We can derive the partially cliticised sentence using the logic of $\pi$-augmented pregroups with the following sequence of derivations. 
\begin{align*}
    &\pi(p) \underline{\pi(o)} \left(\pi(o)^r \pi(p)^r \pi(s)\right)\\
    &= \pi(p) \underline{(\pi(o)^\ell)^r \pi(o)^r} \pi(p)^r \pi(s) &&\pi(o) = (\pi(o)^\ell)^r\\
    &= \pi(p) \underline{\left(\pi(o)\pi(o)^\ell\right)^r} \underline{\left(\pi(p)^r\pi(s)\right)}&& p^rq^r = (qp)^r\\
    &\to \pi(p) \left(\pi(p)^r \pi(s)\right) \left(\pi(o)\pi(o)^\ell\right)^\ell &&(R-PRE)\\
    &= \pi(p) \left(\pi(p)^r \pi(s) \underline{\pi(o)}^{\ell\ell}\right) \underline{\pi(o)}^\ell  &&\text{associativity}\\
    &\to \pi(p) \left(\pi(p)^r \pi(s) \pi(\overline{o})^{\ell\ell}\right) \pi(\overline{o})^\ell &&(\pi-IND): o \Leftrightarrow \overline{o} 
\end{align*}

Here the subject ``Hassan'' remains typed as $\pi(p)$, the verb ``did'' has type $\pi(p)^r \pi(s) \pi(\overline{o})^{\ell\ell}$, and the cliticised direct object ``ash'' has type $\pi(\overline{o})^\ell$. 

Following a similar procedure, we can derive the fully cliticised sentenced ``di d ash'' where the verb ``di'' has type $\pi(s)\pi(\overline{o})^{\ell\ell}\pi(\overline{p})^{\ell\ell}$, the cliticised subject ``d'' has type $\pi(\overline{p})^\ell$, and the cliticised object ``ash'' has type $\pi(\overline{o})^\ell$. 
\section{Conclusion}
We introduced the concept of a $\pi$-augmented pregroup, where decorated substrings of any given string satisfy precyclic axioms. This allows one to include precyclicity into the axioms of a pregroup without reducing the pregroup to a partially ordered group. We proved a normalisation theorem akin to the Switching Lemma and we formalised some applications of precyclic permutations applied to clitic movement in natural languages. 

It would be interesting to see if such an approach could be used to formalise more generally other metarules. For instance, it would be useful to see a formalisation of word order flexibility in Hindi syntax as it was discussed in \cite{debnath2019pregroup} in terms of $\pi$-augmented pregroups. The main target would be to formalise what is described in the reference as ``blocking'' certain precyclic permutations. 

The author is currently working on a cross-linguistic analysis of clitic movement in Eastern Romance languages, where the concept of a $\pi$-augmented pregroup acts as a reasonable underlying theoretical model. 

\bibliography{article}

\begin{thebibliography}{PTG12}

\bibitem[BL01]{bargelli2001french}
Daniele Bargelli and Joachim Lambek.
\newblock An algebraic approach to french sentence structure.
\newblock In {\em International Conference on Logical Aspects of Computational
  Linguistics}, pages 62--78. Springer, 2001.
\newblock
  \href{https://doi.org/10.1007/3-540-48199-0_4}{DOI:10.1007/3-540-48199-0\_4}.

\bibitem[Bus03]{buszkowski2003sequent}
Wojciech Buszkowski.
\newblock Sequent systems for compact bilinear logic.
\newblock {\em Mathematical Logic Quarterly: Mathematical Logic Quarterly},
  49(5):467--474, 2003.

\bibitem[Car02]{cardinal2002algebraic}
Kumi Cardinal.
\newblock An algebraic study of japanese grammar, 2002.
\newblock PhD Thesis.

\bibitem[Cas07]{casadio2007applying}
Claudia Casadio.
\newblock Applying pregroups to italian statements and questions.
\newblock {\em Studia Logica}, 87(2):253--268, 2007.
\newblock
  \href{https://doi.org/10.1007/s11225-007-9086-1}{DOI:10.1007/s11225-007-9086-1}.

\bibitem[Cas10]{casadio2010agreement}
Claudia Casadio.
\newblock Agreement and cliticization in italian: a pregroup analysis.
\newblock In {\em International Conference on Language and Automata Theory and
  Applications}, pages 166--177. Springer, 2010.

\bibitem[CKM]{casadio9tupled}
Claudia Casadio and A~Ki{\'s}lak-Malinowska.
\newblock Tupled pregroups. a study of italian clitic patterns.

\bibitem[CL01]{casadio2001algebraic}
Claudia Casadio and Joachim Lambek.
\newblock An algebraic analysis of clitic pronouns in italian.
\newblock In {\em International Conference on Logical Aspects of Computational
  Linguistics}, pages 110--124. Springer, 2001.
\newblock
  \href{https://doi.org/10.1007/3-540-48199-0_7}{DOI:10.1007/3-540-48199-0\_7}.

\bibitem[CL02]{casadio2002tale}
Claudia Casadio and Joachim Lambek.
\newblock A tale of four grammars.
\newblock {\em Studia Logica}, 71(3):315--329, 2002.

\bibitem[CL05]{casadio2005latin}
Claudia Casadio and Jim Lambek.
\newblock A computational algebraic approach to latin grammar.
\newblock {\em Research on Language and Computation}, 3(1):45--60, 2005.
\newblock
  \href{https://doi.org/10.1007/s11168-005-1286-0}{DOI:10.1007/s11168-005-1286-0}.

\bibitem[CL08]{casadio2008recent}
C~Casadio and J~Lambek.
\newblock Recent computational algebraic approaches to morphology and syntax.
\newblock {\em Polimetrica, Milan}, 2008.

\bibitem[CS09]{casadio2009clitic}
Claudia Casadio and Mehrnoosh Sadrzadeh.
\newblock Clitic movement in pregroup grammar: A cross-linguistic approach.
\newblock In {\em International Tbilisi Symposium on Logic, Language, and
  Computation}, pages 197--214. Springer, 2009.
\newblock
  \href{https://doi.org/10.1007/978-3-642-22303-7_13}{DOI:10.1007/978-3-642-22303-7\_13}.

\bibitem[CSC10]{coecke2010mathematical}
Bob Coecke, Mehrnoosh Sadrzadeh, and Stephen Clark.
\newblock Mathematical foundations for a compositional distributional model of
  meaning, 2010.
\newblock
  \href{https://doi.org/10.48550/arXiv.1003.4394}{DOI:10.48550/arXiv.1003.4394}.

\bibitem[DS19]{debnath2019pregroup}
Alok Debnath and Manish Shrivastava.
\newblock A pregroup representation of word order alternation using hindi
  syntax.
\newblock In {\em Proceedings of the 2019 Conference of the North American
  Chapter of the Association for Computational Linguistics: Student Research
  Workshop}, pages 125--135, 2019.
\newblock
  \href{https://doi.org/10.1007/978-3-319-06880-0_12}{DOI:10.1007/978-3-319-06880-0\_12}.

\bibitem[Fad02]{fadda2002towards}
Mario Fadda.
\newblock Towards flexible pregroup grammars, 2002.

\bibitem[KL80]{kelly1980coherence}
Gregory~M Kelly and Miguel~L Laplaza.
\newblock Coherence for compact closed categories, 1980.
\newblock
  \href{https://doi.org/10.1016/0022-4049(80)90101-2}{DOI:10.1016/0022-4049(80)90101-2}.

\bibitem[KM07]{kislak2007logic}
Aleksandra Ki{\'s}lak-Malinowska.
\newblock On the logic of beta-pregroups, 2007.
\newblock
  \href{https://doi.org/10.1007/s11225-007-9090-5}{DOI:10.1007/s11225-007-9090-5}.

\bibitem[KM12]{kislak2012extended}
Aleksandra Ki{\'s}lak-Malinowska.
\newblock Extended pregroup grammars applied to natural languages, 2012.
\newblock
  \href{https://doi.org/10.12775/LLP.2012.012}{DOI:10.12775/LLP.2012.012}.

\bibitem[Lam97]{lambek1997type}
Joachim Lambek.
\newblock Type grammar revisited.
\newblock In {\em International conference on logical aspects of computational
  linguistics}, pages 1--27. Springer, 1997.
\newblock
  \href{https://doi.org/10.1007/3-540-48975-4_1}{DOI:10.1007/3-540-48975-4\_1}.

\bibitem[Lam07]{lambek2007should}
Joachim Lambek.
\newblock Should pregroup grammars be adorned with additional operations? to
  michael moortgat on his first half century.
\newblock {\em Studia Logica: An International Journal for Symbolic Logic},
  87(2/3):343--358, 2007.
\newblock
  \href{https://doi.org/10.1007/s11225-007-9094-1}{DOI:10.1007/s11225-007-9094-1}.

\bibitem[Lam08]{lambek2008wordtosent}
Joachim Lambek.
\newblock From word to sentence: a computational algebraic approach to grammar,
  2008.
\newblock \href{https://doi.org/10.1007/s10849-006-9035-9}{DOI:
  10.1007/s10849-006-9035-9}.

\bibitem[Lam10]{lambek2010exploring}
Joachim Lambek.
\newblock Exploring feature agreement in french with parallel pregroup
  computations, 2010.
\newblock
  \href{https://doi.org/10.1007/s10849-009-9098-5}{DOI:10.1007/s10849-009-9098-5}.

\bibitem[Lam12]{lambek2012logic}
Joachim Lambek.
\newblock Logic and grammar.
\newblock {\em Studia Logica}, 100(4):667--681, 2012.

\bibitem[LP04]{lambek2003german}
Joachim Lambek and Anne Preller.
\newblock An algebraic approach to the german sentence.
\newblock {\em Linguistic Analysis}, 2004.
\newblock \href{https://hal-lirmm.ccsd.cnrs.fr/lirmm-00108541}{HAL ID:
  irmm-00108541}.

\bibitem[PTG12]{piantadosi2012communicative}
Steven~T Piantadosi, Harry Tily, and Edward Gibson.
\newblock The communicative function of ambiguity in language.
\newblock {\em Cognition}, 122(3):280--291, 2012.

\bibitem[Sad07]{sadrzadeh2007persian}
Mehrnoosh Sadrzadeh.
\newblock Pregroup analysis of persian sentences, 2007.
\newblock
  \href{https://www.cs.ox.ac.uk/files/2416/PersPreGroup.pdf}{https://www.cs.ox.ac.uk/files/2416/PersPreGroup.pdf}.

\bibitem[Sta08]{stabler2008tupled}
Edward~P Stabler.
\newblock Tupled pregroup grammars, 2008.

\bibitem[TA22]{tal2022redundancy}
Shira Tal and Inbal Arnon.
\newblock Redundancy can benefit learning: Evidence from word order and case
  marking.
\newblock {\em Cognition}, 224:105055, 2022.

\bibitem[Yet90]{yetter1990quantales}
David~N Yetter.
\newblock Quantales and (noncommutative) linear logic.
\newblock {\em The Journal of Symbolic Logic}, 55(1):41--64, 1990.

\end{thebibliography}
\bibliographystyle{alpha}
\end{document}